\documentclass[11pt]{article}
\usepackage[T1]{fontenc}
\usepackage{lmodern}
\usepackage{tgpagella}
\usepackage{sectsty}
\allsectionsfont{\sffamily}
\usepackage{titling}

\usepackage{graphicx,amsmath,amsfonts,amsthm,amscd,amssymb,bm,url,color,latexsym}

\renewenvironment{proof}[1][\proofname]{\noindent {\bfseries #1.}  }{\qed}
\usepackage[margin=1in]{geometry}
\usepackage[labelfont=bf, font=sf, margin=1em]{caption}
\usepackage{subcaption}
\usepackage{bbm}
\usepackage{microtype}
\setlength{\captionmargin}{30pt}
\usepackage{algorithm}
\usepackage{algorithmicx}
\usepackage{algpseudocode}
\usepackage{verbatim}
\usepackage{framed}
\usepackage{comment}
\usepackage{tikz}
\usepackage{fge}
\usepackage{mathtools}
\usepackage{enumerate}
\usepackage{dsfont}
\usepackage{booktabs}
\usepackage{multirow}
\newcommand\bluebf[1]{\textcolor{blue}{\mathbf{$#1$}}}
\usepackage{array}
\makeatletter
\newcommand{\thickhline}{%
    \noalign {\ifnum 0=`}\fi \hrule height 1.5pt
    \futurelet \reserved@a \@xhline
}
\newcolumntype{"}{@{\hskip\tabcolsep\vrule width 1.5pt\hskip\tabcolsep}}
\makeatother

\usepackage{mathrsfs}
\usepackage{physics}

\allowdisplaybreaks

\usepackage{hyperref}
\hypersetup{
    colorlinks=true,%
    citecolor=blue,%
    filecolor=blue,%
    linkcolor=blue,%
    urlcolor=blue
}
\usepackage[toc, page]{appendix}

\usepackage[capitalize,nameinlink]{cleveref}

\setcounter{totalnumber}{50}
\setcounter{topnumber}{50}
\setcounter{bottomnumber}{50}
\newtheorem{theorem}{Theorem}[section]

\newtheorem{proposition}[theorem]{Proposition}
\newtheorem{definition}[theorem]{Definition}

\renewcommand{\mathbf}{\boldsymbol}

\newcommand{\mb}{\mathbf}
\newcommand{\mc}{\mathcal}

\newcommand{\md}{\mathds}
\newcommand{\bb}{\mathbb}

\newcommand{\set}[1]{\left\{ #1 \right\}}

\newcommand{\reals}{\bb R}

\newcommand{\eps}{\varepsilon}
\newcommand{\R}{\reals}
\newcommand{\Cp}{\bb C}

\newcommand{\paren}{\pqty}





\newcommand{\e}{\mathrm{e}}

\newcommand{\wt}{\widetilde}




\numberwithin{equation}{section}

\pagestyle{plain}

\title{Inverse Problems, Deep Learning, and Symmetry Breaking}
\author{Kshitij Tayal\thanks{Department of Computer Science and Engineering, University of Minnesota, Twin Cities. Email: \href{mailto:tayal007@umn.edu}{\texttt{tayal007@umn.edu}}}
\and
Chieh-Hsin Lai\thanks{School of Mathematics, University of Minnesota, Twin Cities. Email: \href{mailto:laixx313@umn.edu}{\texttt{laixx313@umn.edu}}}
\and
Vipin Kumar\thanks{Department of Computer Science and Engineering, University of Minnesota, Twin Cities. Email: \href{mailto:kumar001@umn.edu}{\texttt{kumar001@umn.edu}}}
\and
Ju Sun\thanks{Department of Computer Science and Engineering, University of Minnesota, Twin Cities. Email: \href{mailto:jusun@umn.edu}{\texttt{jusun@umn.edu}}}
  }

\date{\today}

\begin{document}
\maketitle

\begin{abstract}
In many physical systems, inputs related by intrinsic system symmetries are mapped to the same output. When inverting such systems, i.e., solving the associated inverse problems, there is no unique solution. This causes fundamental difficulties for deploying the emerging end-to-end deep learning approach. Using the generalized phase retrieval problem as an illustrative example, we show that careful symmetry breaking on the training data can help get rid of the difficulties and significantly improve the learning performance. We also extract and highlight the underlying mathematical principle of the proposed solution, which is directly applicable to other inverse problems.
\end{abstract}

\section{Introduction}\label{Sec:Intro}

\subsection{Inverse problems and deep learning}
For many physical systems, we observe only the output and strive to infer the input. The inference task is captured by the umbrella term inverse problem. Formally, the underlying system is modeled by a forward mapping $f$, and solving the inverse problem amounts to identifying the inverse mapping $f^{-1}$; see \cref{fig:inv_end2end}. Inverse problems abound in numerous fields and take diverse forms: structure from motion in computer vision~\cite{HartleyZisserman2003Multiple}, image restoration in image processing~\cite{GonzalezWoods2017Digital}, source separation in acoustics~\cite{Comon2010Handbook}, inverse scattering in physics~\cite{ColtonKress2013Inverse}, tomography in medical imaging~\cite{Herman2009Fundamentals}, soil profile estimation in remote sensing~\cite{entekhabi1994solving}, various factorization problems in machine learning~\cite{Ge2013Provable}, to name a few.

Let $\mb y$ denote the observed output. Traditionally, inverse problems are mostly formulated as optimization problems of the form
\begin{align} \label{eq:inverse_opt}
\min_{\mb x} \; \ell\paren{\mb y, f\paren{\mb x}} + \lambda \Omega\paren{\mb x},
\end{align}
where $\mb x$ represents the input to be estimated. In the formulation, $\ell\paren{\mb y, f\paren{\mb x}}$ ensures that $\mb y \approx f\paren{\mb x}$ ($\ell$ means loss) in an appropriate sense, $\Omega\paren{\mb x}$ encodes prior knowledge about $\mb x$---often the input cannot be uniquely determined from the observation $\mb y$ alone (i.e., the problem is \emph{ill-posed}) and the knowledge-based regularization $\Omega\paren{\mb x}$ may help mitigate the issue and make the problem well-posed, and $\lambda$ is a tradeoff parameter. For simple inverse problems, \cref{eq:inverse_opt} may admit a simple closed-form solution. For general problems, iterative numerical optimization algorithms are often developed to solve \cref{eq:inverse_opt}~\cite{Kirsch2011Introduction}.

The advent of deep learning has brought tremendous novel opportunities for solving inverse problems. For example, one can learn data-driven loss term $\ell$ and regularization term $\Omega$, and one can also replace components of iterative methods for solving \cref{eq:inverse_opt} by trained neural networks.
\begin{figure}[!htbp]
	\centering
	\includegraphics[width = 0.6\linewidth]{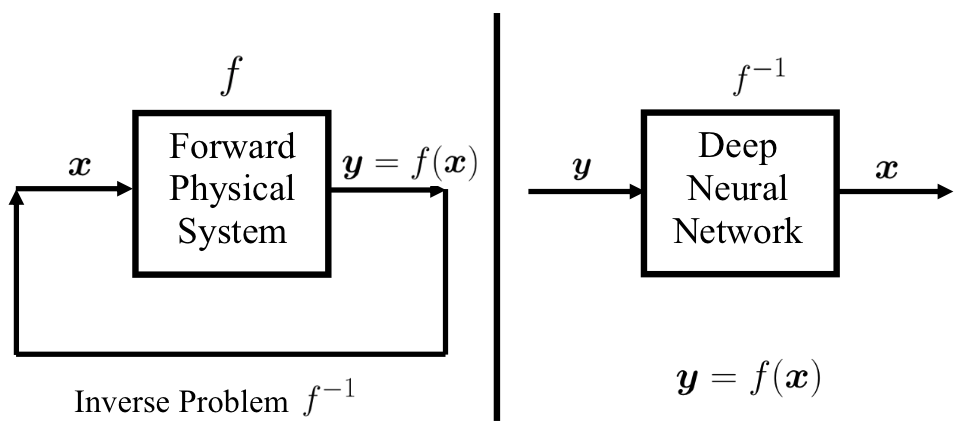}
	\vspace{-.1in}
	\caption{Illustration of inverse problem and the end-to-end learning approach.}
	\label{fig:inv_end2end}
\end{figure}
 These ideas has enabled capturing structures in practical data that are traditionally difficult to encode using analytic expressions, and lead to faster and/or more effective algorithms. The most radical is perhaps the end-to-end approach: a deep neural network (DNN) is directly set up and trained to approximate the inverse mapping $f^{-1}$---backed by the famous universal approximation theorem~\cite{PoggioEtAl2017Why}---based on a sufficiently large set of $\paren{\mb x, \mb y}$ pairs; see \cref{fig:inv_end2end}. Instead of citing the abundance of individual papers, we refer the reader to the excellent review articles~\cite{McCannEtAl2017Convolutional,LucasEtAl2018Using,ArridgeEtAl2019Solving} on these developments.

\subsection{Difficulty with symmetries}   \label{sec:intro_sym_problem}

In this paper, we focus on the end-to-end learning approach for inverse problems. This approach has recently been widely acclaimed for its remarkable performance on several tasks such as image denoising~\cite{xie2012image}, image super-resolution~\cite{dong2014learning}, image deblurring~\cite{xu2014deep}, and sparse recovery~\cite{mousavi2017learning}. These problems are all linear inverse problems, for which the forward mapping $f$ is linear. What about nonlinear problems?

When the forward mapping $f$ is nonlinear, we start to see intrinsic symmetries in many systems. We give several quick examples here:
\begin{itemize}
\item \textbf{Fourier phase retrieval}~\cite{BendoryEtAl2017Fourier} The forward model is $\mb Y = \abs{\mc F \paren{\mb X}}^2$, where $\mb X \in \Cp^{n \times n}$ and $\mb Y \in \R^{m \times m}$ are matrices and $\mc F$ is a $2$D oversampled Fourier transform. The operation $\abs{\cdot}$ takes complex magnitudes of the entries elementwise. It is known that translations and conjugate flippings applied on $\mb X$, and also global phase transfer of the form $e^{i \theta } \mb X$ all lead to the same $\mb Y$.
\item \textbf{Blind deconvolution}~\cite{LamGoodman2000Iterative,TonellotBroadhead2010Sparse} The forward model is $\mb y = \mb a \circledast  \mb x$, where $\mb a$ is the convolution kernel, $\mb x$ is the signal (e.g., image) of interest, and $\circledast$ denotes the circular convolution. Both $\mb a$ and $\mb x$ are inputs. Here, $\mb a \circledast \mb x = \paren{\lambda \mb a} \circledast \paren{\mb x/\lambda}$ for any $\lambda \ne 0$, and circularly shifting $\mb a$ to the left and shifting $\mb x$ to the right by the same amount does not change $\mb y$.
\item \textbf{Blind source separation}~\cite{Comon2010Handbook} The forward model is $\mb Y = \mb A \mb X$, where $\mb A$ is the mixing matrix and $\mb X$ is the source matrix and both $\mb A$ and $\mb X$ are inputs. The scaling symmetry similar to above is also present here. Moreover, signed permutations are another kind of symmetry, i.e., $\mb A \mb X =\paren{ \mb A \mb \Pi \mb \Sigma} \paren{\mb \Sigma^{-1} \mb \Pi^{-1} \mb X }$ for any permutation matrix $\mb \Pi$ and any diagonal sign matrix $\mb \Sigma$. \footnote{For both blind deconvolution and blind source separation, depending on structures of the inputs, there may be other symmetries that we have not covered here. The symmetries we have discussed tend to be persistent nonetheless.}
\item \textbf{Synchronization over compact groups}~\cite{PerryEtAl2018Message} For $g_1, \dots, g_n$ over a compact group $\mc G$, the observation is a set of pairwise relative measurements $y_{ij} = g_i g_j^{-1}$ for all $\paren{i, j}$ in an index set $\mc E \subset \set{1, \dots, n} \times \set{1, \dots, n}$. Obviously, any global shift of the form $g_k \mapsto g_k g$ for all $k \in \set{1, \dots, n}$, for any $g \in \mc G$, leads to the same set of measurements.
\end{itemize}
Solving these inverse problems means recovering the input up to the intrinsic system symmetries, as evidently this is the best one can hope for.

Are symmetries good or bad for problem solving? That depends on how we deal with them.
\begin{figure}[!htbp]
	\centering
	\includegraphics[width=0.2\linewidth]{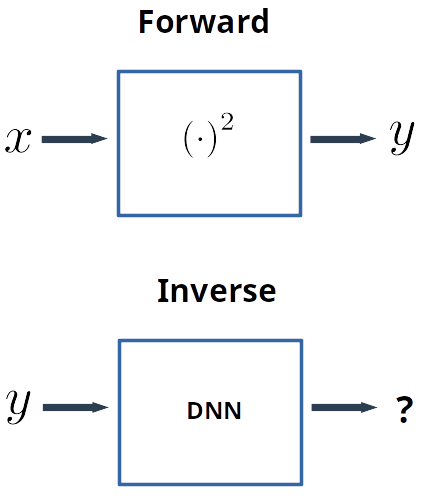}
	\hspace{1em}
	\includegraphics[width=0.4\linewidth]{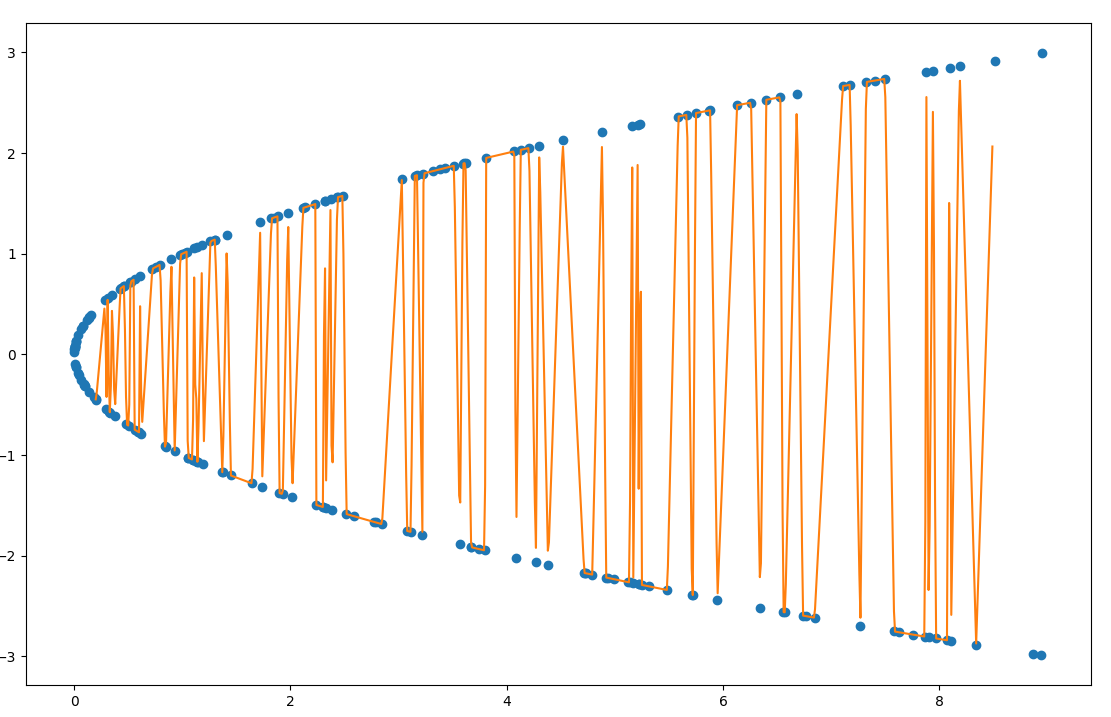}
	\caption{Learn to take square root. (Left) The forward and inverse models;  (Right) The function (in orange) determined by the training points. }
	\label{fig:learn_sqrt}
\end{figure}
Imagine that we are bored and try to train a DNN to take square root. We randomly draw sufficiently many $x \in \R$ and thereby generate training samples $\set{\paren{x_i, x_i^2}}$. We feed the data samples to the DNN and push the button to start torturing our GPU machines. We are happy so long as ultimately the trained DNN output a good estimate of the square root for any input, up to sign. Does it work out as expected? It turns out not quite. For any two points $x_i^2$ and $x_j^2$ that are near, the corresponding $x_i$ and $x_j$ may be close in magnitude but differ in sign. This implies that the function determined by the training data points is highly oscillatory (see \cref{fig:learn_sqrt}) and behaves like a function with many and frequent points of discontinuity---Interestingly, the more train samples one gathers, the more serious the problem is. Intuitively, DNNs with continuous or even smooth activation functions struggle when approximating these irregular functions.

\subsection{Our contribution: symmetry breaking}

The above example is of course contrived and an easy fix for the problem is we only take positive (or negative) $x_i$'s. For general inverse problems with symmetries, so long as the symmetries can relate remote inputs to the same output, e.g., all the symmetries we discussed in the quick examples, the above issue of approximating highly irregular functions arises. It is a natural question if our easy fix for learning square root can be generalized. In this paper,
\begin{itemize}
	\item We take the generalized phase retrieval problem as an example, and show that effective symmetry breaking can be performed for both the real-valued and complex-valued versions of the problem. We also corroborate our theory with extensive numerical experiments.
	\item By working out the example, we identify the basic principle of effective symmetry breaking, which can be readily applied to other inverse problems with symmetries.
\end{itemize}

\paragraph{Notation.} Boldface capitals are matrices (e.g., $\mb A$) and boldface letters are vectors (e.g., $\mb x$). $\R_+$ means the set of positive reals. Measure by default is the Lebesgue measure for Euclidean spaces. Other notations are either standard or defined inline.

\section{Generalized Phase Retrieval}
\label{sec:gpr}
Fourier phase retrieval (PR) alluded to above is a classical problem in computational imaging with a host of applications~\cite{shechtman2015phase,BendoryEtAl2017Fourier}. Despite the existence of effective empirical methods to solve the problem~\cite{Fienup1982Phase}, there is little theory on when and why these methods work and also if there are alternatives.

Over the past decade, numerous papers from the signal processing and applied mathematics communities have tried to develop provable methods for PR, based on generalized models in which the Fourier transform $\mc F$ is replaced by a generic, often random, linear operator.  This is the generalized PR problem. The name arguably is a misnomer. When $\mc F$ is replaced by a random linear operator, the randomness often helps kill the translation and conjugate flipping symmetries in Fourier PR and only the global phase symmetry is left. So the generalization often leads to simplified, if not over, PR problems.

Nonetheless, the focus of this paper is not on solving (Fourier) PR per se, but on demonstrating the principle of symmetry-breaking taking generalized PR as an example.  We work with two versions of generalized PR:
\begin{description}
\item [Real Gaussian PR] The forward model: $\mb y = \abs{\mb A \mb x}^2$, where $\mb x \in \R^n$, $\mb y \in \R^m$, and $\mb A \in \R^{m \times n}$ is iid real Gaussian. The absolute-square operator $\abs{\cdot}^2$ is applied elementwise. The only symmetry is sign, as $\mb x$ and $-\mb x$ are mapped to the same $\mb y$.
\item [Complex Gaussian PR] The forward model: $\mb y = \abs{\mb A \mb x}^2$, where $\mb x \in \Cp^n$, $\mb y \in \R^m$, and $\mb A \in \Cp^{m \times n}$ is iid complex Gaussian. The modulus-square operator $\abs{\cdot}^2$ is applied elementwise. The only symmetry is global phase shift, as $e^{i \theta}\mb x$ for all $\theta \in [0, 2\pi)$ are mapped to the same $\mb y$.
\end{description}
These two versions have been intensively studied in the recent developments of generalized PR; see, e.g., ~\cite{CandesEtAl2012PhaseLift,CandesEtAl2015Phase,SunEtAl2017Geometric}.

\subsection{Real Gaussian PR}
\begin{figure}[!htbp]
  \centering
  \includegraphics[width=0.6\linewidth]{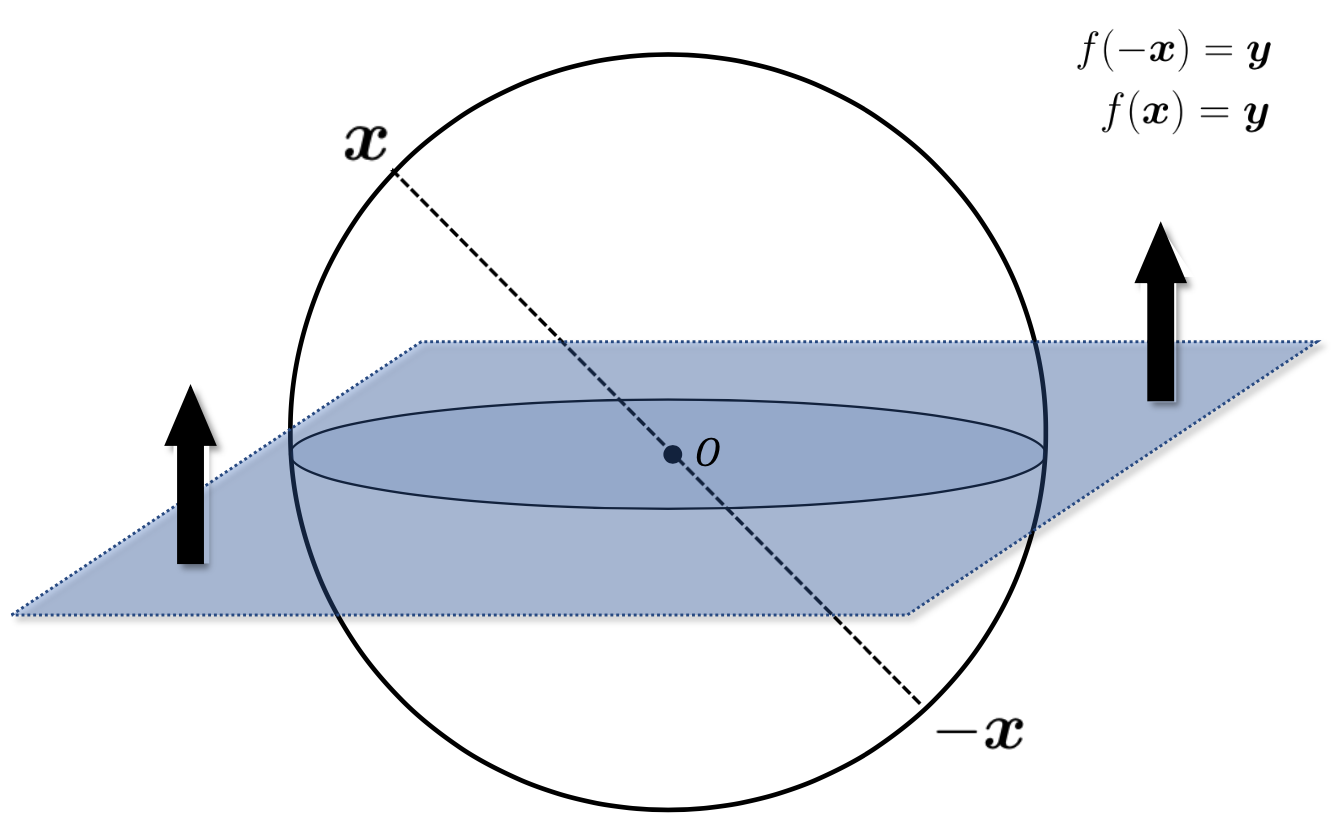}
  \caption{Symmetry breaking for real Gaussian phase retrieval. }
  \label{fig:sphere_cut}
\end{figure}
Recall that in our learning square root example, the sign ambiguity caused the irregularity in the function determined by the training samples. A similar problem occurs here. For two samples that are close in the observation, say $\mb y$ and $\mb y + \mb \delta$ for a small $\mb \delta$, the paired inputs may be $\mb x$ and $-\paren{\mb x + \mb \delta'}$ for a small $\mb \delta'$. Thus, for the inverse mapping that our DNN tries to approximate, a small perturbation $\mb \delta$ in the variable leads to $2\mb x - \mb \delta'$ change to the function value, and sharp changes of this kind happen frequently as we have many data samples.

It is tempting to generalize our solution to the square root example. There, the symmetry is the sign and we broke it by restricting the range of desired DNN output to $\R_+$. Here, the symmetry is the global sign of vectors and antipodal points map to the same observation. Thus, an intuitive generalization is to break antipodal point pairs, and a simple solution is to make a hyperplane cut and take samples from only one side of the hyperplane! This is illustrated in \cref{fig:sphere_cut} and we use the hyperplane $\set{\mb x \in \R^n: x_n = 0}$---in $\R^3$, this is the $xy$\nobreakdash-plane.

In $\R^3$, we can see directly from~\cref{fig:sphere_cut} that the upper half space cut out by the $xy$\nobreakdash-plane is connected. Moreover, it is representative as any point in the space (except for the plane itself) can be represented by a point in this set by appropriate global sign adjustment, and it cannot be made smaller to remain representative. The following proposition says that these properties also hold for high-dimensional spaces.
\begin{proposition}\label{prop:realsmallest}
Let
\begin{align}
    & R \doteq \left \{ \mb x = ( x_1, \cdots, x_n ) \in \R ^n: x_n > 0  \right \},\\
&Z  \doteq \left \{ \mb x = ( x_1, \cdots, x_n ) \in \R ^n: x_n = 0  \right \}.
\end{align}
Then the following properties hold:
\begin{enumerate}[{(i)}]
    \item \label{prop:1} \textbf{(connected)} $R$ is connected in $\R^n$;
    \item \label{prop:3} \textbf{(representative)} $Z$ is of measure zero \cite{rudin2006real} and for any $\mb x \in \R ^n \setminus  Z,$ either $\mb x\in R $ or $-\mb x\in R$. That is, $R$ can represent any point in $\R^n \setminus Z$;
    \item \label{prop:4} \textbf{(smallest)} If we remove any single point $\mb x$ of $R$, then there is no other point in $R \setminus \{\mb x\}$ that can represent $\mb x$. Namely, the resulting set is not representative anymore.
\end{enumerate}
\end{proposition}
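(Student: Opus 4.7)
The plan is to dispatch the three parts separately, since each reduces to an elementary observation about the open half-space $R = \{\mb x \in \R^n : x_n > 0\}$ and the sign involution $\mb x \mapsto -\mb x$.

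For part~\eqref{prop:1}, I would simply note that $R$ is convex (as it is the intersection of $\R^n$ with an open half-space defined by a linear inequality). For any two points $\mb x, \mb y \in R$, the segment $t \mapsto (1-t)\mb x + t\mb y$ with $t\in[0,1]$ lies entirely in $R$ because its last coordinate $(1-t)x_n + t y_n$ is a convex combination of positive numbers. Hence $R$ is path-connected, and therefore connected.

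For part~\eqref{prop:3}, the measure-zero claim for $Z$ is standard: $Z$ is an $(n-1)$-dimensional linear subspace of $\R^n$, and for any $\varepsilon>0$ it can be covered by the countable union of slabs $[-k,k]^{n-1}\times(-\varepsilon/(k^{n-1}2^{k+1}),\varepsilon/(k^{n-1}2^{k+1}))$, yielding total Lebesgue measure at most $\varepsilon$. For the representation claim, pick any $\mb x = (x_1,\dots,x_n) \in \R^n\setminus Z$, so $x_n \neq 0$. If $x_n > 0$ then $\mb x \in R$; if $x_n < 0$ then $-\mb x$ has last coordinate $-x_n > 0$, so $-\mb x \in R$.

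For part~\eqref{prop:4}, the key observation is that the sign symmetry group $\{\pm 1\}$ acting on $\R^n$ has orbits of size exactly two away from the origin: the orbit of $\mb x$ is $\{\mb x,-\mb x\}$. So a point $\mb x' \in R$ ``represents'' $\mb x$ in the sense of~\eqref{prop:3} iff $\mb x' \in \{\mb x, -\mb x\}$. But if $\mb x \in R$ then $x_n > 0$, so $-x_n < 0$, meaning $-\mb x \notin R$. Hence the only representative of $\mb x$ inside $R$ is $\mb x$ itself, and removing it destroys the representativeness at $\mb x$.

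Overall there is no genuinely hard step; the argument is essentially a bookkeeping exercise about the half-space and its boundary hyperplane. The one spot that requires a hair more care than the rest is the measure-zero claim in~\eqref{prop:3}, but even that is a standard covering argument (or one may simply invoke the fact that a proper linear subspace of $\R^n$ has Lebesgue measure zero). The principal subtlety to be transparent about is the precise meaning of ``represent'': I would state upfront that $\mb x' \in R$ represents $\mb x$ iff $\mb x'$ lies in the $\{\pm 1\}$-orbit of $\mb x$, so that parts~\eqref{prop:3} and~\eqref{prop:4} are phrased against a common definition.
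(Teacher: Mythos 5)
Your proposal is correct and follows essentially the same route as the paper: connectedness via convexity/line segments, representativeness by a trivial case split on the sign of $x_n$, and minimality by observing that $\mb x$ and $-\mb x$ cannot both lie in $R$. The only difference is cosmetic: for the measure-zero claim you use a covering (or simply cite that a proper linear subspace is null), whereas the paper computes $\mu(Z)=0$ by Tonelli's theorem; both are standard and equivalent in effort.
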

\begin{proof}
First recall the property that if any two points in a given set can be connected by a continuous path lying entirely in the set, then this set must be a connected set \cite{kelley2017general}. Now any two points $\mb x, \mb y \in R$ can be connected by the line segment $\set{\alpha x + \paren{1 - \alpha} \mb y: \alpha \in [0, 1]} \subset R$. Thus $R$ is connected.

Moreover, $Z = \R^{n-1}\times \{ 0 \}$ has Lebesgue measure zero since
\begin{equation}\label{eq:zero}
  \mu \paren{Z} = \displaystyle \int_{\R^n} \mathds{1}_{Z} \; d \mb x=  \displaystyle \int_{\R^{n-1}} \displaystyle\paren{ \int_{ \{ 0 \} } \mathds{1}_{Z}\; dx_n} \; d \mb x_{-n} = 0.
\end{equation}
Here $\mathds{1}_{Z}$ is the indicator function on $Z$, and $\mb x_{-n} \in \R^{n-1}$ is the vector formed by the first $n-1$ coordinates of $\mb x$. We used Tonelli's theorem \cite{rudin2006real} to obtain the second equality, and the fact $\int_{\set{0}} \md 1_Z \; dx_n = 0$ to obtain the third equality. The rest of \eqref{prop:3} is straightforward.

For \eqref{prop:4}, suppose that there is another point $\wt{\mb x} \in R \setminus \{\mb x \}$ which can represent $\mb x$ up to a global sign flipping. Since both $\mb x$ and $\wt{\mb x}$ are in $R$, which means they need to have the same sign for the last component, it must be $\mb x = \mb \tilde{\mb x}$. We get a contradiction.
\end{proof}

The coordinate hyperplane $Z$ we use is arbitrary, and we can prove similar results for any hyperplane passing through the origin. The set $Z$ is negligible, as the probability of sampling a point exactly from $Z$ is zero. In fact, we can break the symmetry in $Z$ also by recursively applying the current idea. For the sake of simplicity and in view of the probable diminishing return, we will not pursue the refined scheme here.

Does this help solve our problem? Imagine that we have collected a set of training samples $\{\mb x_i, \abs{\mb A \mb x_i}^2\}$ for real Gaussian PR. Now we are going to preprocess the data samples according to the above hyperplane cut: for all $\mb x_i$'s, if $\mb x_i$ lies above $Z$, we simply leave it untouched; if $\mb x_i$ lies below $Z$, we switch the sign of $\mb x_i$; if $\mb x_i$ happens to lie on $Z$, we make a small perturbation to $\mb x_i$ and then adjusts the sign as before accordingly. Now $\mb x_i \in  R$ for all $i$. Since $R$ is a connected set, when there are sufficiently dense training samples, small perturbations to $\abs{\mb A \mb x}^2$ always only lead to small perturbations to $\mb x_i$. So we now have a nicely behaved target function to approximate using a DNN. Also, $R$ being representative implies that a sufficiently dense sample set should enable reasonable learning.

The set of three properties is also necessary for effective symmetry breaking and learning. Being representative is easy to understand. If the representative set is not the smallest, symmetry is still present for certain points in the set and so symmetry breaking is not complete. Now the set can be smallest representative but not connected. An example in the setting of \cref{prop:realsmallest} would be taking out a small strict subset of $R$, say $B \subsetneq R$, and consider the set $M \doteq \paren{-B} \cup \paren{R \setminus B}$. It is easy to verify that $M$ is smallest representative, but not connected. This leaves us the trouble of approximating (locally) highly oscillatory functions.

\subsection{Complex Gaussian PR}
We now move to the complex case and deal with a different kind of symmetry. Recall that in the complex Gaussian PR, $e^{i \theta} \mb x$ for all $\theta \in [0, 2\pi)$ are mapped to the same $\abs{\mb A \mb x}^2$, i.e., global phase shift is the symmetry. These ``equivalent" points form a continuous curve in the complex space, contrasting the isolated antipodal point pairs in the real case.

Inspired by the real version, we generalize the three desired properties for symmetry breaking to the complex case. Particularly, ``representative" in this context means:
\begin{definition}[representative] \label{def:represent}
Let $S$ be a subset of $\Cp^n$. We say that $S$ is a representative subset for $\Cp^n$ if the following holds: there is a measure zero subset $Z$ of $\Cp^n$ such that for any $\mb x \in \Cp ^n \setminus  Z,$ we can find a $\theta \in [0, 2\pi)$ and an $\mb x' \in S$ so that $ \mb x =\e^{i \theta}\mb x'$.
\end{definition}
In words, a subset $S$ is representative in this context if except for a negligible subset of $\Cp^n$, any element of $\Cp^n$ can be represented by an element of $S$ after appropriate global phase shift.
\begin{definition}[smallest representative] \label{def:smallest_represent}
Let $S$ be a subset of $\Cp^n$. We say that $S$ is a smallest representative subset for $\Cp^n$ if it is representative and no element in $S$ can be represented by a distinct element of $S$.
\end{definition}
To construct a smallest representative set for $\Cp^n$, it is helpful to start with low dimensions. When $n=1$, any ray stemming from the origin (with origin removed) is a smallest representative subset for $\Cp$. For simplicity, we can take the positive axis $\R_+$. When $n=2$, it is natural to use the building block $\R_+$ for $\Cp$ and start to consider product constructions of the form $\R_+ \times T \subset \Cp^2$ with $T \in \Cp$. Similarly for high dimensions, we try constructions of the form $\R_+ \times T \subset \Cp^n$ with $T\in \Cp^{n-1}$. Another consideration is the measure-zero set. In the real case, we used a coordinate hyperplane. Here, as a natural generalization, we take a complex hyperplane:
\begin{align}  \label{eq:complex_zero_net}
  Z = \left \{ \mb x = ( x_1, \cdots, x_n ) \in \Cp^n: x_1 = 0  \right \}.
\end{align}
The question now is how to choose $T$ to make $\R_+ \times T$ a smallest representative subset for $\Cp^n \setminus Z$.

It turns out we actually do not get many choices. The following result says that real positivity assumed for the first coordinate constrains the construction significantly and the rest of coordinates are forced to be the entire complex space $\Cp^{n-1}$.
\begin{proposition}\label{prop:construct}
If $S \doteq \R_+ \times T$ with $T \subset \Cp^{n-1}$ is a representative subset for $\Cp^n \setminus Z$, then $T=\Cp^{n-1}$.
\end{proposition}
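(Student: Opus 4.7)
The plan is to show $T=\Cp^{n-1}$ by a direct evaluation argument: pick an arbitrary $\mb t \in \Cp^{n-1}$, feed a carefully chosen test point into the representability hypothesis, and observe that the positivity constraint on the first coordinate of elements of $S$ kills all freedom in the global phase, forcing $\mb t \in T$.

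First I would fix $\mb t \in \Cp^{n-1}$ and set $\mb x \doteq (1,\mb t) \in \Cp^n$. Since the first coordinate equals $1 \ne 0$, the test point lies in $\Cp^n \setminus Z$, so by assumption there exist $\theta \in [0,2\pi)$ and $\mb x' = (r,\mb s) \in \R_+ \times T$ with $\mb x = e^{i\theta}\mb x'$. Next I would match first coordinates to obtain $e^{i\theta}r = 1$. Because $r>0$ is a strictly positive real, $e^{i\theta} = 1/r$ is also a strictly positive real, which forces $e^{i\theta} = 1$, and hence $\theta = 0$ and $r = 1$. Substituting back into the remaining $n-1$ coordinates gives $\mb t = e^{i\theta}\mb s = \mb s \in T$. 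As $\mb t \in \Cp^{n-1}$ was arbitrary, this proves $T = \Cp^{n-1}$.

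The conceptual point is that the requirement $r \in \R_+$ leaves exactly one viable phase $\theta$ in any representation, so the ``$T$-slot'' of $\mb x'$ is pinned down by $\mb x$ itself; representability therefore demands that every $\mb t \in \Cp^{n-1}$ already appear in $T$. The main interpretive subtlety—really the only non-routine point—is how to read ``representative subset for $\Cp^n \setminus Z$'' in the sense of \cref{def:represent}. I take it to mean that the measure-zero exception set in the definition is precisely the hyperplane $Z$ from \eqref{eq:complex_zero_net}, so every $\mb x \in \Cp^n \setminus Z$ is genuinely representable. Under a looser reading allowing additional measure-zero exceptions, the test point $(1,\mb t)$ might itself be excluded for individual $\mb t$'s, and a Fubini-style argument on the non-representable locus $\{(x_1,\mb x_{-1}) : x_1 \ne 0,\ (|x_1|/x_1)\mb x_{-1} \notin T\}$ would only yield that $\Cp^{n-1} \setminus T$ has measure zero rather than the sharper conclusion $T = \Cp^{n-1}$.
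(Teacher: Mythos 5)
Your proof is correct and is essentially the paper's argument recast as a direct proof rather than a proof by contradiction: both examine the first coordinate of the relation $\mb x = e^{i\theta}\mb x'$ and use the positivity of the first coordinate of elements of $S$ to force $\theta = 0$, after which the remaining coordinates give $\mb t \in T$. Your closing caveat about how to read ``representative subset for $\Cp^n \setminus Z$'' is a fair observation, but the paper's proof implicitly adopts the same reading you do (the exception set is exactly $Z$), so there is no discrepancy.
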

\begin{proof}
We prove by contradiction. Suppose that there is a $\mb x' \in \Cp^{n-1}$ but $\mb x' \notin T$. Then for any $x_1 \in \R_+$, $\mb x = (x_1; \mb x') \notin \R_+ \times T = S$ and $\mb x \in \Cp^{n-1} \setminus Z$. Since $S$ is representative, we can find a $\theta \in [0, 2\pi)$ and $\widetilde{\mb x}\in S$ so that
\begin{equation}\label{eq:represent}
    \e^{i\theta}\mb x = \widetilde{\mb x}.
\end{equation}
 Since $S$ has the first coordinate to be positive real numbers, by looking at the first component of~\cref{eq:represent} we have
\begin{equation} \label{eq:construct_angle}
    \begin{cases}
    x_1\cos\theta > 0 \\
    x_1\sin\theta = 0
    \end{cases},
\end{equation}
from where we deduce that $\theta=0$ and so $\mb x = \widetilde{\mb x} \in S$. This contradicts our construction that $\mb x \notin S$.
\end{proof}

We now focus on this candidate set
\begin{align}  \label{eq:complex_small_rep}
 R \doteq \left \{ \mb x = ( x_1, \cdots, x_n ) \in \Cp ^n: \Im(x_1) = 0, x_1 > 0  \right \}.
\end{align}
Our next proposition confirms that this is indeed a good choice.
\begin{proposition}\label{prop:smallest}
The set $R$ defined in~\cref{eq:complex_small_rep} is a connected, smallest representative set for $\Cp^n \setminus Z$ with $Z$ defined as in~\cref{eq:complex_zero_net}. Moreover, $Z$ is a measure-zero subset of $\Cp^n$.
\end{proposition}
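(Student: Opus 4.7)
The plan is to verify the four claims---measure-zero of $Z$, connectedness of $R$, representativeness of $R$, and minimality (no two distinct points of $R$ are phase-equivalent)---essentially in parallel, each by a short direct argument that parallels the real case handled in \cref{prop:realsmallest}. None of the four steps requires a deep idea; the role of \cref{prop:construct} is to justify why $R$ is the natural candidate, and what is left here is to check that this candidate actually works.

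First, for the measure-zero claim, I would identify $\Cp^n$ with $\R^{2n}$ in the obvious way, so that $Z$ becomes $\set{0} \times \R^{2(n-1)} \subset \R^{2n}$ (i.e., the first two real coordinates are zero). Then an application of Tonelli's theorem, essentially identical to \cref{eq:zero}, shows $\mu(Z) = 0$.

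Next, for connectedness I would simply observe that $R$ is convex: for any $\mb x, \mb y \in R$ and any $\alpha \in [0,1]$, the convex combination $\alpha \mb x + (1-\alpha)\mb y$ has first coordinate $\alpha x_1 + (1-\alpha) y_1$, which is again a positive real; the remaining coordinates are free. The straight-line segment therefore lies in $R$, and the criterion already invoked in the proof of \cref{prop:realsmallest} gives connectedness.

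For representativeness, given any $\mb x \in \Cp^n \setminus Z$, write the first coordinate as $x_1 = r\e^{i\phi}$ with $r > 0$ and $\phi \in [0, 2\pi)$; then $\e^{-i\phi}\mb x$ has first coordinate equal to $r > 0$ and so lies in $R$, exhibiting $\mb x$ as a phase shift of an element of $R$. Finally, for ``smallest'', suppose $\mb x, \mb x' \in R$ and $\mb x = \e^{i\theta}\mb x'$ for some $\theta \in [0,2\pi)$; the first-coordinate equation $x_1 = \e^{i\theta} x_1'$ with $x_1, x_1' \in \R_+$ forces $\e^{i\theta} \in \R_+$, hence $\theta = 0$ and $\mb x = \mb x'$. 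This is exactly the same short contradiction argument used in part \eqref{prop:4} of \cref{prop:realsmallest} and again at the end of \cref{prop:construct} (cf.\ \cref{eq:construct_angle}), so no new obstacle arises. The only place where one must be slightly careful is the measure-zero computation, since one must remember that ``measure zero'' is meant in the sense of $2n$-dimensional Lebesgue measure on $\Cp^n \cong \R^{2n}$, not complex ``dimension''; once this is made explicit, the Tonelli argument is routine.
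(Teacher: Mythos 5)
Your proposal is correct and follows essentially the same argument as the paper's: line-segment convexity for connectedness, multiplication by $\e^{-i\phi}$ (where $\phi$ is the argument of $x_1$) for representativeness, the first-coordinate positivity forcing $\theta=0$ for minimality, and the Tonelli-type computation for measure zero. The only cosmetic difference is that you spell out the $\Cp^n \cong \R^{2n}$ identification explicitly, which the paper just alludes to by pointing back to \cref{eq:zero}.
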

\begin{proof}
First, $Z$ has measure zero due to the same reason as in~\cref{eq:zero}. Next, it is clear that any two points $\mb x, \mb y \in R$ can be connected by the line segment $\set{\alpha \mb x + \paren{1-\alpha} \mb y: \alpha \in [0, 1]} \subset R$, and so $R$ is a connected set. To see $R$ is representative, for any $\mb x = (r_1 e^{i\theta_1}, x_2, \dots, x_n) \in \Cp^n \setminus Z$ where $r_1 > 0$, one can choose $\theta = 2\pi - \theta_1$ so that $e^{i \theta} \mb x \in R$. To show it is also smallest, we use a similar argument to that in \cref{prop:construct}.
Let $\mb x \in R$ where we write $\mb x=(x_1; \mb x')$ with $\mb x' \in \Cp^{n-1}$. If another element $\widetilde{\mb x} \neq \mb x \in R$ can be represented by $\mb x$, namely, if there is $\theta \in [0, 2\pi)$ such that $\widetilde{\mb x} = \e^{i \theta}\mb x$, then we need to have $\Im(\e^{i \theta} x_1)=0$ and $\Re(\e^{i \theta} x_1)>0$. That is,
\begin{equation} \label{eq:angle}
	\begin{cases}
    x_1 \cos \theta > 0\\
    x_1 \sin \theta = 0
    \end{cases}.
\end{equation}
Since $x_1>0$, \cref{eq:angle} implies that $\theta = 0$. But this contradicts with that $\mb x \neq \widetilde{\mb x}$ and thus no element in $R$ can be represented by a distinct element in $R$.
\end{proof}

So our construction $R$ enjoys the three desired properties, similar to the real case, despite that the problem symmetry is different here. Once we emulate the data preprocessing step for the real case, i.e., all $x_i$'s for the training data points $\{(\mb x_i, \abs{\mb A \mb x_i}^2)\}$ are mapped into $R$,
we obtain an effective symmetry breaking algorithm for complex Gaussian PR.

For general inverse problems, although the symmetries might be very different than here and the sample spaces could also be diverse, the three properties we have identified, which concern only the geometric and topological aspects of the space, can potentially generalize as a basic principle for effective symmetry breaking.

\section{Related Work}
As alluded to above, recently there have been intensive research efforts on solving inverse problems using deep learning~\cite{McCannEtAl2017Convolutional,LucasEtAl2018Using,ArridgeEtAl2019Solving}. The end-to-end approach is attractive not only because of its simplicity, but also because (i) we do not even need to know the forward models, so long as we can gather sufficiently many data samples and weak system properties such as symmetries---e.g., this is handy for complex imaging systems~\cite{HorisakiEtAl2016Learning,LiEtAl2018Deep};
(ii) or alternatives have rarely worked, and a good example is Fourier PR~\cite{Fienup1982Phase,SinhaEtAl2017Lensless}.

Besides the linear inverse problems, the end-to-end deep learning approach has been empirically applied to a number of problems with symmetries, e.g., blind image deblurring (i.e., blind deconvolution)~\cite{TaoEtAl2018Scale}, real-valued Fourier phase retrieval~\cite{SinhaEtAl2017Lensless}, 3D surface tangents and normal prediction~\cite{huang2019framenet}, nonrigid structure-from-motion~\cite{kong2019deep,WangEtAl2020Deep}. We believe that our work is the first to delineate the symmetry problem  confronting effective learning and propose a solution principle that likely generalizes to other inverse problems.

Mathematically, points related by symmetries form an equivalence class and these equivalence classes form a partition of the input space for the forward model. Our symmetric breaking task effectively consists in finding a \emph{consistent} representation for the equivalence classes, where the consistency here requires the set of the representatives to be topologically connected.

\section{Numerical Experiments}
In this section, we set up various numerical experiments to verify our claim that effective symmetry breaking facilitates efficient learning. Specifically, we work with both the real and complex Gaussian PR problems, and try to answer the following questions:
\begin{itemize}
\item Do our symmetry breaking schemes help get us improved recovery performance?
\item How does the performance vary when the problem dimension $n$ changes?
\item What is the effect of the number of training samples on the recovery performance?
\end{itemize}

\subsection{Basic experimental setups}
\paragraph{Learning models}
We set up an end-to-end pipeline and use neural network models to approximate the inverse mappings, as is typical done in this approach. The following are brief descriptions of the models used in our comparative study. Recall that in our problem setup for Gaussian PR, $n$ is the dimension for $\mb x$ and $m$ is the dimension for $\mb y$.
\begin{itemize}
\item{\textbf{Neural Network (NN)}}: fully connected feedforward NN with architecture $m$-256-128-64-$n$.
\item{\textbf{Wide Neural Network (WNN)}}: we increase the size of hidden units of the NN by a factor of 2. The architecture is $m$-512-256-128-$n$.
\item{\textbf{Deep Neural Network (DNN)}}: we increase the number and size of hidden layers of the NN by adding two more layers. The architecture is $m$-2048-1024-512-256-128-$n$.
\item{\textbf{$K$-Nearest Neighbors ($K$-NN)}}: $K$-NN regression, where prediction is the average of the values of $K$ nearest neighbors. In this work we use $K = 5$.
\end{itemize}

\paragraph{Data}
We take $m = 4n$, which is just above the threshold for ensuring injectivity of the forward models up to the intrinsic symmetry~\cite{BalanEtAl2006signal}\footnote{For real Gaussian PR, the threshold is near $m = 2n$. Here, we use $m = 4n$ for both the real and complex versions for simplicity. }. To generate data samples, we draw iid uniformly random data points $\mb x_i$'s from the unit ball and consequently generate $\{(\mb x_i,\abs{\mb A \mb x_i}^2)\}$ as the simulated datasets. All the datasets are split into $80\%$ training and $20\%$ test, and $10\%$ of the training data are used for validation.

We conduct experiments with varying input dimension $n$ and dataset size. Specifically, we experiment with $n=5, 10, 15$ and dataset size of $2e4$, $5e4$, $1e5$, $1e6$, respectively. We do not test higher dimensions, in view of the exponential growth of sample requirement to cover the high-dimensional ball. For most practical inverse problems where the input data tend to possess low-dimensional structures despite the apparent high dimensions, the sample requirement will not be an issue and our symmetric breaking scheme naturally adapts to the structures. One may suggest performing a real-data experiment on natural images, as was done in numerous previous papers~\cite{CandesEtAl2012PhaseLift,CandesEtAl2015Phase,SunEtAl2017Geometric}. This is sensible, but not interesting in the Gaussian PR context, as the sign (resp. phase transfer) symmetry for real (resp. complex) PR is naturally broken due to the restriction of the image values to be nonnegative. As we argued before, the Gaussian settings erase the essential difficulties of Fourier PR. Real-data experiments will become nontrivial in Fourier PR, as the nonnegative restriction does not kill the flipping and translation symmetries. But that entails working out the specific symmetry breaking strategy for Fourier PR; we leave it as future work.

For all neural network models, we train them based on two variants of the training samples: one with symmetry untouched (i.e., before symmetry breaking) and one with symmetry breaking (i.e., after symmetry breaking). The former just leaves the samples unchanged, whereas the latter pre-processes the training samples using the procedures we described in \cref{sec:gpr} for the real- and complex-valued Gaussian PR, respectively.  To distinguish the two variants, we append our neural network model names with ``-A" to indicate \emph{after} symmetry breaking and ``-B" to indicate \emph{before} symmetry breaking.

\paragraph{Training and error metric}
The mean loss is used in the objective. We use the Adam optimizer \cite{kingma2014adam} and train all models for a maximum of $100$ epochs. The learning rate is set as $0.001$ by default and training is stopped if the validation loss does not reduce for $10$ consecutive epochs. The validation set is also used for hyperparameter tuning. To train the models for the complex PR, real and complex parts of any complex vector are concatenated into a long real vector. The $K$-Nearest Neighbor model is fit on the whole training dataset and serves as a baseline for each experiment.

To imitate the real-world test scenario, we do not perform symmetry breaking on the test data. To measure performance, we use the normalized mean square error (MSE) which is rectified to account for the symmetry:
\begin{align}\label{eq:error_metric}
    \eps_{\mathrm{real}} = & \min_{s \in \{+1,-1\}} \frac{||\widehat{\mb x} s - \mb x||^{2}}{n}, \quad & \text{(real)} \\
    \eps_{\mathrm{comp}} = & \min_{\theta \in [0,2\pi)} \frac{|| \widehat{\mb x} e^{i\theta} - \mb x||^{2}}{n}., \quad & \text{(complex)}
\end{align}
where $\widehat{\mb x}$ is the prediction by the learned models.

\subsection{Quantitative results}\label{sec:results1}
\begin{table*}[!htbp]
\caption {Summary of results in terms of test error for real Gaussian PR. Blue numbers indicate the best performance in each row.} \label{tab:NN_real}
\setlength{\tabcolsep}{4pt}
\renewcommand{\arraystretch}{1.1}
\begin{center}
{\small
\begin{tabular}{c|c|ccc|ccc|ccc}
\hline
\multirow{1}{*}{\textbf{$n$}} & \multirow{1}{*}{\textbf{Sample}}  & \multirow{1}{*}{\textbf{NN-$A$}}  & \multirow{1}{*}{\textbf{$K$-NN}}  & \multirow{1}{*}{\textbf{NN-$B$}} & \multirow{1}{*}{\textbf{WNN-$A$}}  & \multirow{1}{*}{\textbf{$K$-NN}}  & \multirow{1}{*}{\textbf{WNN-$B$}} & \multirow{1}{*}{\textbf{DNN-$A$}}  & \multirow{1}{*}{\textbf{$K$-NN}}  & \multirow{1}{*}{\textbf{DNN-$B$}} \\ \hline

\multirow{4}{*}{$5$} & 2e4 & 0.0010 & 0.0017 & 0.0283 & \bluebf{0.0008} & 0.0018 & 0.0283 & 0.0010 & 0.0019 & 0.0284 \\ \cline{2-11}
 & 5e4 & \bluebf{0.0006} & 0.0012 & 0.0282 & 0.0008 & 0.0017 & 0.0284& 0.0007 & 0.0014 & 0.0285  \\ \cline{2-11}
 & 1e5 & \bluebf{0.0005} & 0.0010 & 0.0284 & \bluebf{0.0005} & 0.0012 & 0.0283 & 0.0013 & 0.0018 & 0.0284  \\ \cline{2-11}
 & 1e6 & \bluebf{0.0004} & 0.0007 & 0.0283 & 0.0005 & 0.0006 & 0.0283  & 0.0007 & 0.0008 & 0.0283 \\\thickhline
\multirow{4}{*}{$10$} & 2e4  & 0.0011 & 0.0020 & 0.0082 & 0.0009 & 0.0022 & 0.0082 & \bluebf{0.0008} & 0.0021 & 0.0082  \\ \cline{2-11}
 & 5e4 & 0.0009 & 0.0016 & 0.0082 & \bluebf{0.0006} & 0.0018 & 0.0082 & 0.0009 & 0.0020 & 0.0082 \\ \cline{2-11}
 & 1e5 & 0.0009 & 0.0016 & 0.0082  & \bluebf{0.0006} & 0.0015 & 0.0082  & 0.0008 & 0.0017 & 0.0082  \\ \cline{2-11}
 & 1e6  & 0.0007 & 0.0013 & 0.0082 & \bluebf{0.0005} & 0.0010 & 0.0082 & 0.0009 & 0.0011 & 0.0082 \\\thickhline
\multirow{4}{*}{$15$} & 2e4 & 0.0012 & 0.0017 & 0.0038 & \bluebf{0.0009} & 0.0016 & 0.0038 & \bluebf{0.0009} & 0.0016 & 0.0038 \\ \cline{2-11}
 & 5e4  & 0.0011 & 0.0014 & 0.0038 & 0.0009 & 0.0014 & 0.0038  & \bluebf{0.0008} & 0.0015 & 0.0038 \\ \cline{2-11}
 & 1e5 &  0.0010 & 0.0013 & 0.0038  & 0.0008 & 0.0013 & 0.0038 & \bluebf{0.0007} & 0.0013 & 0.0038 \\ \cline{2-11}
 & 1e6  & 0.0008 & 0.0009 & 0.0038 & \bluebf{0.0007} & 0.0010 & 0.0038  & 0.0009 & 0.0010 & 0.0038 \\ \hline
\end{tabular}
}
\end{center}
\end{table*}

\begin{table*}[!htbp]
\caption {Summary of results in terms of test error for complex Gaussian PR. Blue numbers indicate the best performance in each row.} \label{tab:NN_complex}
\setlength{\tabcolsep}{4pt}
\renewcommand{\arraystretch}{1.1}
\begin{center}
{\small
\begin{tabular}{c|c|c c c|c c c|c c c}
\hline
\multirow{1}{*}{\textbf{$n$}} & \multirow{1}{*}{\textbf{Sample}}  & \multirow{1}{*}{\textbf{NN-$A$}}  & \multirow{1}{*}{\textbf{$K$-NN}}  & \multirow{1}{*}{\textbf{NN-$B$}} & \multirow{1}{*}{\textbf{WNN-$A$}}  & \multirow{1}{*}{\textbf{$K$-NN}}  & \multirow{1}{*}{\textbf{WNN-$B$}} & \multirow{1}{*}{\textbf{DNN-$A$}}  & \multirow{1}{*}{\textbf{$K$-NN}}  & \multirow{1}{*}{\textbf{DNN-$B$}} \\ \hline



\multirow{4}{*}{5} & 2e4 & 0.0016 & 0.0044 & 0.0786 & \bluebf{0.0011} & 0.0087 & 0.0882 & 0.0013 & 0.0045 & 0.0699 \\ \cline{2-11}

 & 5e4 & \bluebf{0.0010} & 0.0039 & 0.0718 & 0.0012 & 0.0038 & 0.0669 & 0.0019 & 0.0039 & 0.0697 \\ \cline{2-11}

& 1e5 & \bluebf{0.0006} & 0.0021 & 0.0473 & 0.0032 & 0.0034 & 0.0942 & 0.0011 & 0.0013 & 0.0854\\ \cline{2-11}

& 1e6 & \bluebf{0.0005} & 0.0006 & 0.0642 & 0.0064 & 0.0072 & 0.0453 & 0.0014 & 0.0015 & 0.0731 \\\thickhline

\multirow{4}{*}{10} & 2e4 & 0.0079 & 0.0237 & 0.0452 & 0.0065 & 0.0080 & 0.0453  & \bluebf{0.0061} & 0.0239 & 0.0380 \\ \cline{2-11}

 & 5e4  & \bluebf{0.0056} & 0.0066 & 0.0428 & 0.0089 & 0.0191 & 0.0419  & 0.0082 & 0.0181 & 0.0400 \\ \cline{2-11}

 & 1e5  & 0.0097 & 0.0139 & 0.0436 & \bluebf{0.0028} & 0.0058 & 0.0431 & 0.0055 & 0.0086 & 0.0453 \\
 \cline{2-11}
  & 1e6 & 0.0136 & 0.0179 & 0.0448 & 0.0085 & 0.0162 & 0.0432 & \bluebf{0.0077} & 0.0118 & 0.0399 \\\thickhline

\multirow{4}{*}{15} & 2e4  & 0.0282 & 0.0287 & 0.0282 & \bluebf{0.0129} & 0.0143 & 0.0296 & 0.0180 & 0.0189 & 0.0277 \\ \cline{2-11}

 & 5e4  & 0.0192 & 0.0272 & 0.0313  & \bluebf{0.0062} & 0.0126 & 0.0308  & 0.0172 & 0.0233 & 0.0313 \\ \cline{2-11}
 & 1e5   & 0.0188 & 0.0226 & 0.0258  & \bluebf{0.0141} & 0.0269 & 0.0295  & 0.0177 & 0.0206 & 0.0274 \\ \cline{2-11}

 & 1e6    & 0.0136 & 0.0179 & 0.0448  & \bluebf{0.0131} & 0.0184 & 0.0395
 & 0.0182 & 0.0202 & 0.0283 \\ \hline
\end{tabular}
}
\end{center}
\end{table*}

\cref{tab:NN_real} provides test errors for all models trained for real PR, and likewise \cref{tab:NN_complex} presents test errors for complex PR. All models for the same combination of input dimension $n$ and sample size use the same set of data. Blues numbers in the tables indicate the best performing model across all the models in each row.

We first note that for the same NN architecture with any dimension-sample combination, symmetry breaking always leads to substantially improved performance. Without symmetry breaking, i.e., as shown in the $(\cdot)$-$B$ columns, the estimation errors are always worse, if not significantly so, than the simple baseline $K$-NN model. By contrast, symmetry breaking as shown in the $(\cdot)$-$A$ columns always leads to improved performance compared to the baseline. To rule out the possibility that the inferior performance of $(\cdot)$-B's is due to the capacity of the NNs, we can compare the performance of NN with that of WNN and DNN, where the latter two have $4\times$ and $50\times$ more parameters than the plain NN, as shown in \cref{tab:nnparam}.
\begin{table}[!htbp]
	\caption {Count of trainable parameters for $n=15$} \label{tab:nnparam}
	\centering
		\begin{tabular}[t]{ccc}
		\toprule
		Models & Real  & Complex \\
		\midrule
		Neural Network   & 57,743  & 58,718  \\
		Wide Neural Network &   197,391 & 199,326\\
		Deep Neural Network & 2,914,063 & 2,915,998\\
		\bottomrule
	\end{tabular}
\end{table}
From \cref{tab:NN_real} and \cref{tab:NN_complex}, it is clear that the large capacities do not lead to improved performance, suggesting that our plain NNs are probably already powerful enough. Moreover, for a fixed learning model, increasing the number of samples beyond $2e4$ also only yields marginally improved errors, indicating that bad performance cannot be attributed to lack of samples. These observations together show that $(\cdot)$-$B$'s are inefficient learners, as they do not explicitly handle the symmetry problem.

Moreover, as the dimension $n$ grows, there is a persistent trend that the $\paren{\cdot}$-B models performs incrementally better. This might be counter-intuitive at first sight, as the coverage of the space becomes sparser as the dimension grows with same number of random samples. A reverse trend could be expected. But as we hinted at the end of \cref{sec:intro_sym_problem}, more training samples also cause more wildly behaved functions---the problem becomes less severe as the dimension grows as the density of sample points becomes smaller. In fact, when the sample density is extremely low, the other trend that is dictated by the lack of samples could reveal. Nonetheless, here we focus on the data-intensive regime. Overall, the difficulty of approximating highly oscillatory functions is evident.

\subsection{Difficulty with symmetries: what happened?}
 In this section, we investigate several aspects of the neural networks in the hope that some aspect can potentially help overcome the learning difficulties with symmetries. Based on the above discussion, we focus on the NN-$A$ model. Typically, besides the network size, performance of neural networks is also strongly affected by the mini-batch size, learning rate, and regularization. To analyze the impact of the latter three, we vary each one of the parameters while keeping the others fixed. We work with real PR only and expect the situation for complex PR to be similar. To keep a reasonably fast run time while not hurting the performance, we take $2e4$ data samples, which seems sufficient for the above results.

 \begin{figure}[!htbp]
 	\centering
 	\includegraphics[width=0.45\linewidth]{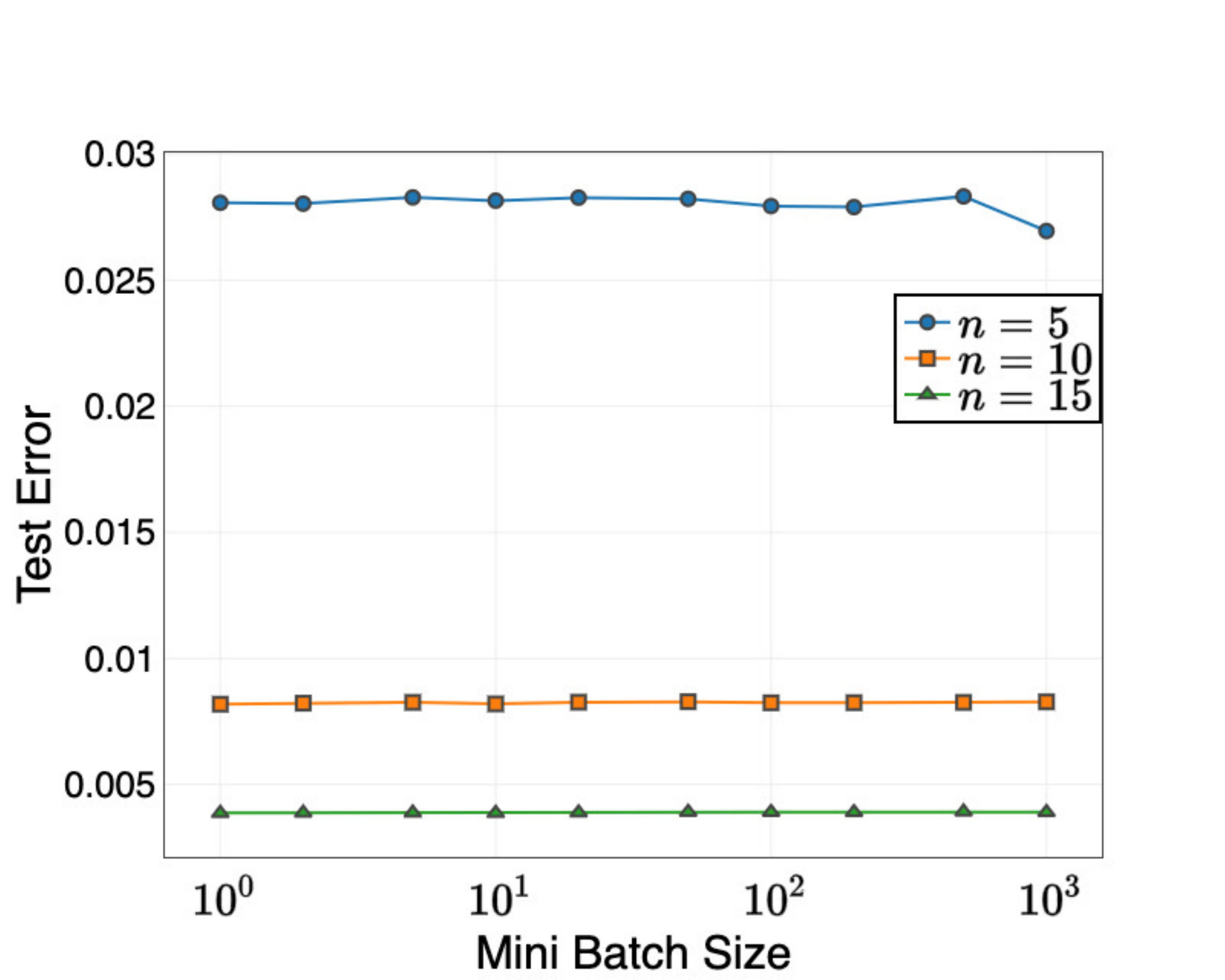}
   \includegraphics[width = 0.45\linewidth]{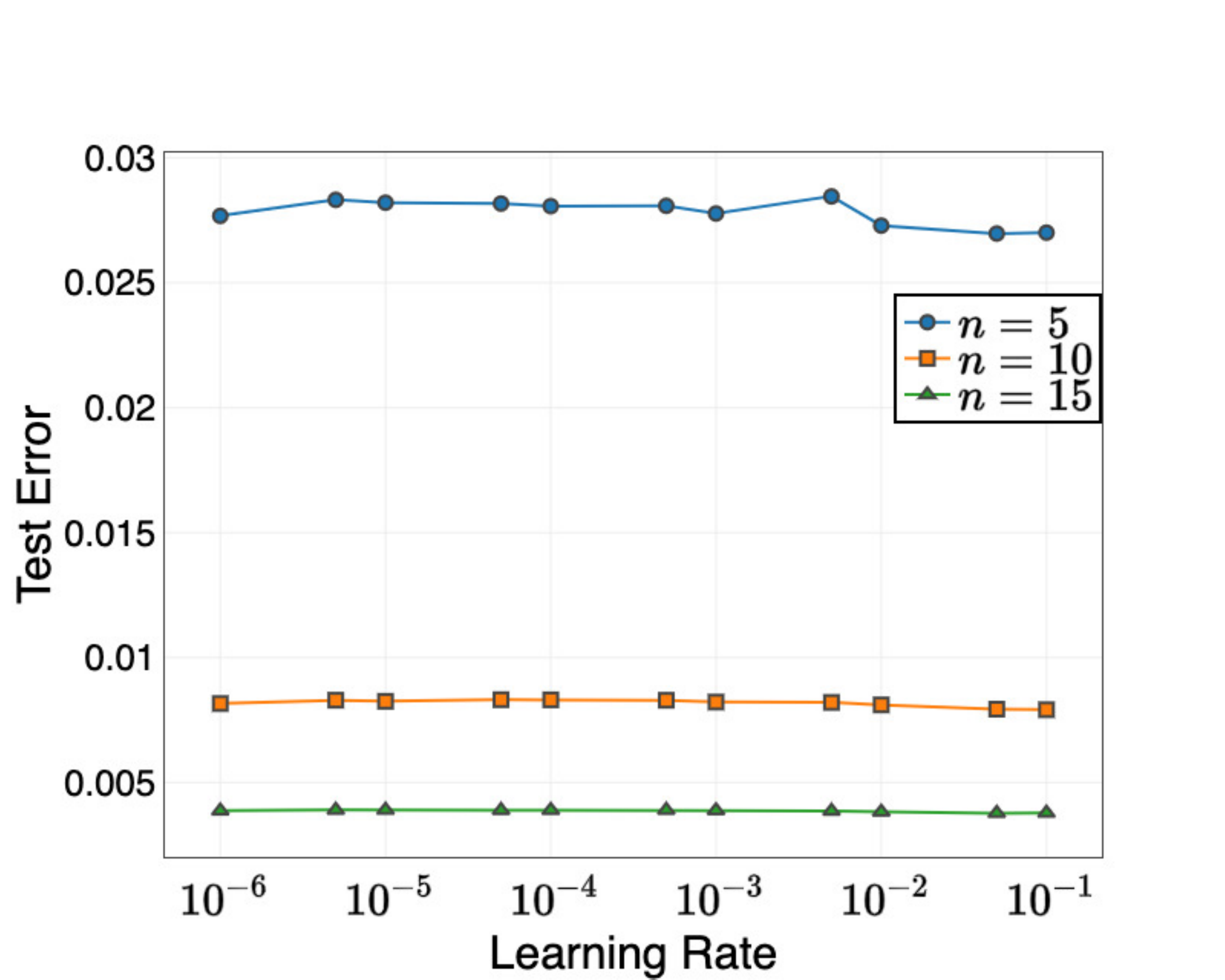}
 	\caption{(Left) Test error vs. mini-batch size; (Right) Test error vs. learning rate. }
 	\label{fig:minibatch}
 \end{figure}
\paragraph{Effect of mini-batch size}
The mini-batch size in stochastic optimization algorithms such as Adam that we use is considered to have a substantial impact on the performance of neural networks \cite{bengio2012practical}. To see if this can help with the performance, we change the size to sweep several orders of magnitudes, i.e., $1e0, 1e1, 1e2, 1e3$, and also experiment with different dimensions, i.e., $n = 5, 10, 15$, on NN-$B$. From the results presented in \cref{fig:minibatch} (Left), we conclude that varying the mini-batch size has a negligible effect on the test error.

\paragraph{Effect of learning rate}
The learning rate is the most critical hyper-parameter \cite{jacobs1988increased} that guides the change in model weights in response to the estimated error. To examine its effect on the test error, we vary it across six orders of magnitude: $10^{-1}$, $10^{-2}$, $10^{-3}$, $10^{-4}$, $10^{-5}$, $10^{-6}$, and retrain NN-$B$. Again, the magnitude of the test error roughly remains the same across the distinct learning rates, as shown in \cref{fig:minibatch} (Right).

\paragraph{Effect of regularization}
We explore three regularization schemes, $\mathcal{L}_1$, $\mathcal{L}_2$ and $\mathcal{L}_1$ + $\mathcal{L}_2$ . \cref{tab:reg} shows the results after our retraining of NN-$B$ with the different schemes. It appears that no scheme clearly wins out.
\begin{table}[!htbp]
	\caption {Test error using different regularization schemes} \label{tab:reg}
	\centering
		\begin{tabular}[t]{cccc}
		\toprule
		Regularization & $n = 5$  & $n = 10$ & $n = 15$ \\
		\midrule
		$\mathcal{L}_1$   & 0.02848  & 0.00831 & 0.00392  \\
		$\mathcal{L}_2$  &   0.02847 & 0.00830 & 0.00392\\
		$\mathcal{L}_1$ + $\mathcal{L}_2$ &  0.02846 & 0.00830 &  0.00392\\
		\bottomrule
	\end{tabular}
\end{table}

These results reinforce our claim that the bad performance of neural network learning without symmetry breaking is due to the intrinsic difficulty of approximating irregular functions, not due to suboptimal choice of neural network architecture or training hyper-parameters.


\section{Conclusion}
In this paper, we explain how symmetries in the forward processes can lead to difficulty---approximating highly oscillatory functions---in solving the resulting inverse problems by an end-to-end deep learning approach. Using the real and complex Gaussian PR problems as examples, we show how effective symmetry breaking can be performed to remove the above difficulties in learning, and we also verify the effectiveness of our scheme using extensive numerical experiments. In particular, we show through experiments that without carefully dealing with the symmetries, learning can be highly inefficient and the performance can be inferior to simple baseline methods.

We also identify a basic principle for breaking symmetry and phrase the task as finding connected representative set for equivalence classes. The task seems highly generic and only pertains to the certain topological and geometrical structure of the data space. This favorably suggests that our strategy is probably universal and can be adapted for other inverse problems.

{\small
\bibliographystyle{amsalpha}
\bibliography{DL4INV}

\newcommand{\etalchar}[1]{$^{#1}$}
\providecommand{\bysame}{\leavevmode\hbox to3em{\hrulefill}\thinspace}
\providecommand{\MR}{\relax\ifhmode\unskip\space\fi MR }
\providecommand{\MRhref}[2]{%
  \href{http://www.ams.org/mathscinet-getitem?mr=#1}{#2}
}
\providecommand{\href}[2]{#2}
\begin{thebibliography}{PWBM18}

\bibitem[AMOS19]{ArridgeEtAl2019Solving}
Simon Arridge, Peter Maass, Ozan \"{O}ktem, and Carola-Bibiane Sch\"{o}nlieb,
  \emph{Solving inverse problems using data-driven models}, Acta Numerica
  \textbf{28} (2019), 1--174.

\bibitem[BBE17]{BendoryEtAl2017Fourier}
Tamir Bendory, Robert Beinert, and Yonina~C. Eldar, \emph{Fourier phase
  retrieval: Uniqueness and algorithms}, Compressed Sensing and its
  Applications, Springer International Publishing, 2017, pp.~55--91.

\bibitem[BCE06]{BalanEtAl2006signal}
Radu Balan, Pete Casazza, and Dan Edidin, \emph{On signal reconstruction
  without phase}, Applied and Computational Harmonic Analysis \textbf{20}
  (2006), no.~3, 345--356.

\bibitem[Ben12]{bengio2012practical}
Yoshua Bengio, \emph{Practical recommendations for gradient-based training of
  deep architectures}, Neural networks: Tricks of the trade, Springer, 2012,
  pp.~437--478.

\bibitem[CK13]{ColtonKress2013Inverse}
David Colton and Rainer Kress, \emph{Inverse acoustic and electromagnetic
  scattering theory}, Springer New York, 2013.

\bibitem[CLS15]{CandesEtAl2015Phase}
Emmanuel~J. Candes, Xiaodong Li, and Mahdi Soltanolkotabi, \emph{Phase
  retrieval via wirtinger flow: Theory and algorithms}, {IEEE} Transactions on
  Information Theory \textbf{61} (2015), no.~4, 1985--2007.

\bibitem[Com10]{Comon2010Handbook}
Pierre Comon, \emph{Handbook of blind source separation: Independent component
  analysis and applications}, ACADEMIC PR INC, 2010.

\bibitem[CSV12]{CandesEtAl2012PhaseLift}
Emmanuel~J. Cand{\`{e}}s, Thomas Strohmer, and Vladislav Voroninski,
  \emph{{PhaseLift}: Exact and stable signal recovery from magnitude
  measurements via convex programming}, Communications on Pure and Applied
  Mathematics \textbf{66} (2012), no.~8, 1241--1274.

\bibitem[DLHT14]{dong2014learning}
Chao Dong, Chen~Change Loy, Kaiming He, and Xiaoou Tang, \emph{Learning a deep
  convolutional network for image super-resolution}, European conference on
  computer vision, Springer, 2014, pp.~184--199.

\bibitem[ENN94]{entekhabi1994solving}
Dara Entekhabi, Hajime Nakamura, and Eni~G Njoku, \emph{Solving the inverse
  problem for soil moisture and temperature profiles by sequential assimilation
  of multifrequency remotely sensed observations}, IEEE Transactions on
  Geoscience and Remote Sensing \textbf{32} (1994), no.~2, 438--448.

\bibitem[Fie82]{Fienup1982Phase}
J.~R. Fienup, \emph{Phase retrieval algorithms: a comparison}, Applied Optics
  \textbf{21} (1982), no.~15, 2758.

\bibitem[Ge13]{Ge2013Provable}
Rong Ge, \emph{Provable algorithms for machine learning problems}, Ph.D.
  thesis, Princeton University, 2013.

\bibitem[GW17]{GonzalezWoods2017Digital}
Rafael~C. Gonzalez and Richard~E. Woods, \emph{Digital image processing (4th
  edition)}, Pearson, 2017.

\bibitem[Her09]{Herman2009Fundamentals}
Gabor~T. Herman, \emph{Fundamentals of computerized tomography}, Springer
  London, 2009.

\bibitem[HTT16]{HorisakiEtAl2016Learning}
Ryoichi Horisaki, Ryosuke Takagi, and Jun Tanida, \emph{Learning-based imaging
  through scattering media}, Optics Express \textbf{24} (2016), no.~13, 13738.

\bibitem[HZ03]{HartleyZisserman2003Multiple}
Richard Hartley and Andrew Zisserman, \emph{Multiple view geometry in computer
  vision}, Cambridge university press, 2003.

\bibitem[HZFG19]{huang2019framenet}
Jingwei Huang, Yichao Zhou, Thomas Funkhouser, and Leonidas~J Guibas,
  \emph{Framenet: Learning local canonical frames of 3d surfaces from a single
  rgb image}, Proceedings of the IEEE International Conference on Computer
  Vision, 2019, pp.~8638--8647.

\bibitem[Jac88]{jacobs1988increased}
Robert~A Jacobs, \emph{Increased rates of convergence through learning rate
  adaptation}, Neural networks \textbf{1} (1988), no.~4, 295--307.

\bibitem[KB14]{kingma2014adam}
Diederik~P Kingma and Jimmy Ba, \emph{Adam: A method for stochastic
  optimization}, arXiv preprint arXiv:1412.6980 (2014).

\bibitem[Kel17]{kelley2017general}
John~L Kelley, \emph{General topology}, Courier Dover Publications, 2017.

\bibitem[Kir11]{Kirsch2011Introduction}
Andreas Kirsch, \emph{An introduction to the mathematical theory of inverse
  problems}, Springer New York, 2011.

\bibitem[KL19]{kong2019deep}
Chen Kong and Simon Lucey, \emph{Deep non-rigid structure from motion},
  Proceedings of the IEEE International Conference on Computer Vision, 2019,
  pp.~1558--1567.

\bibitem[LG00]{LamGoodman2000Iterative}
Edmund~Y. Lam and Joseph~W. Goodman, \emph{Iterative statistical approach to
  blind image deconvolution}, Journal of the Optical Society of America A
  \textbf{17} (2000), no.~7, 1177.

\bibitem[LIMK18]{LucasEtAl2018Using}
Alice Lucas, Michael Iliadis, Rafael Molina, and Aggelos~K. Katsaggelos,
  \emph{Using deep neural networks for inverse problems in imaging: Beyond
  analytical methods}, {IEEE} Signal Processing Magazine \textbf{35} (2018),
  no.~1, 20--36.

\bibitem[LXT18]{LiEtAl2018Deep}
Yunzhe Li, Yujia Xue, and Lei Tian, \emph{Deep speckle correlation: a deep
  learning approach toward scalable imaging through scattering media}, Optica
  \textbf{5} (2018), no.~10, 1181.

\bibitem[MB17]{mousavi2017learning}
Ali Mousavi and Richard~G Baraniuk, \emph{Learning to invert: Signal recovery
  via deep convolutional networks}, 2017 IEEE international conference on
  acoustics, speech and signal processing (ICASSP), IEEE, 2017, pp.~2272--2276.

\bibitem[MJU17]{McCannEtAl2017Convolutional}
Michael~T. McCann, Kyong~Hwan Jin, and Michael Unser, \emph{Convolutional
  neural networks for inverse problems in imaging: A review}, {IEEE} Signal
  Processing Magazine \textbf{34} (2017), no.~6, 85--95.

\bibitem[PMR{\etalchar{+}}17]{PoggioEtAl2017Why}
Tomaso Poggio, Hrushikesh Mhaskar, Lorenzo Rosasco, Brando Miranda, and Qianli
  Liao, \emph{Why and when can deep-but not shallow-networks avoid the curse of
  dimensionality: A review}, International Journal of Automation and Computing
  \textbf{14} (2017), no.~5, 503--519.

\bibitem[PWBM18]{PerryEtAl2018Message}
Amelia Perry, Alexander~S. Wein, Afonso~S. Bandeira, and Ankur Moitra,
  \emph{Message-passing algorithms for synchronization problems over compact
  groups}, Communications on Pure and Applied Mathematics \textbf{71} (2018),
  no.~11, 2275--2322.

\bibitem[Rud06]{rudin2006real}
Walter Rudin, \emph{Real and complex analysis}, Tata McGraw-hill education,
  2006.

\bibitem[SEC{\etalchar{+}}15]{shechtman2015phase}
Yoav Shechtman, Yonina~C Eldar, Oren Cohen, Henry~Nicholas Chapman, Jianwei
  Miao, and Mordechai Segev, \emph{Phase retrieval with application to optical
  imaging: a contemporary overview}, IEEE signal processing magazine
  \textbf{32} (2015), no.~3, 87--109.

\bibitem[SLLB17]{SinhaEtAl2017Lensless}
Ayan Sinha, Justin Lee, Shuai Li, and George Barbastathis, \emph{Lensless
  computational imaging through deep learning}, Optica \textbf{4} (2017),
  no.~9, 1117.

\bibitem[SQW17]{SunEtAl2017Geometric}
Ju~Sun, Qing Qu, and John Wright, \emph{A geometric analysis of phase
  retrieval}, Foundations of Computational Mathematics \textbf{18} (2017),
  no.~5, 1131--1198.

\bibitem[TB10]{TonellotBroadhead2010Sparse}
T.~L. Tonellot and M.~K. Broadhead, \emph{Sparse seismic deconvolution by
  method of orthogonal matching pursuit}, 72nd {EAGE} Conference and Exhibition
  incorporating {SPE} {EUROPEC} 2010, {EAGE} Publications {BV}, jun 2010.

\bibitem[TGS{\etalchar{+}}18]{TaoEtAl2018Scale}
Xin Tao, Hongyun Gao, Xiaoyong Shen, Jue Wang, and Jiaya Jia,
  \emph{Scale-recurrent network for deep image deblurring}, 2018 {IEEE}/{CVF}
  Conference on Computer Vision and Pattern Recognition, {IEEE}, jun 2018.

\bibitem[WLL20]{WangEtAl2020Deep}
Chaoyang Wang, Chen-Hsuan Lin, and Simon Lucey, \emph{Deep nrsfm++: Towards 3d
  reconstruction in the wild}, arXiv:2001.10090 (2020).

\bibitem[XRLJ14]{xu2014deep}
Li~Xu, Jimmy~SJ Ren, Ce~Liu, and Jiaya Jia, \emph{Deep convolutional neural
  network for image deconvolution}, Advances in neural information processing
  systems, 2014, pp.~1790--1798.

\bibitem[XXC12]{xie2012image}
Junyuan Xie, Linli Xu, and Enhong Chen, \emph{Image denoising and inpainting
  with deep neural networks}, Advances in neural information processing
  systems, 2012, pp.~341--349.

\end{thebibliography}
}

\end{document}